\documentclass[11pt,a4paper]{article}

\usepackage[utf8]{inputenc}
\usepackage[T1]{fontenc}
\usepackage{amsmath,amssymb,amsfonts,amsthm}
\usepackage[numbers,sort&compress]{natbib}
\usepackage{graphicx}
\usepackage{geometry}
\usepackage{booktabs}
\usepackage{algorithm}
\usepackage{algpseudocode}
\usepackage{xcolor}
\usepackage{url}
\usepackage[colorlinks=true,citecolor=blue,urlcolor=blue]{hyperref}
\usepackage{multirow}

\newcommand{\chisao}{\textsc{ChiSao}}

\newcommand{\RR}{\mathbb{R}}
\DeclareMathOperator*{\argmax}{arg\,max}

\newtheorem{proposition}{Proposition}
\newtheorem{remark}{Remark}

\begin{document}

\title{\chisao{}: A GPU-Native Parallel Optimizer for Multimodal\\
Black-Box Functions via Convergence-Anticonvergence Oscillation}

\author{Ira Wolfson\\
  Department of Electronics and Electrical Engineering\\
  Braude College of Engineering, Karmiel, Israel\\
  \texttt{wolfsoni@braude.ac.il}}

\maketitle

\begin{abstract}
Finding all modes of a multimodal black-box function is a fundamental challenge in optimization, Bayesian inference, and scientific computing. Existing approaches---basin-hopping, CMA-ES, multistart gradient descent---operate sequentially and cannot exploit the massive parallelism of modern GPU hardware. We introduce \chisao{} (\textbf{C}onvergence-\textbf{H}alt-\textbf{I}nvert-\textbf{S}tick-\textbf{A}nd-\textbf{O}scillate), a GPU-native population optimizer that runs an entire sample batch simultaneously and exploits a deliberate convergence-anticonvergence oscillation cycle to escape local traps while freezing confirmed modes. The structural move is asymmetric: samples that reach true peaks are frozen (``stuck'') and preserved, while the rest keep exploring via momentum-based anti-convergence and stochastically smoothed gradients. Adaptive reseeding via two complementary strategies (Repulse Monkey and Golden Rooster) maintains population diversity throughout. On all 42 functions of the Simon Fraser University optimization benchmark suite across dimensions $d \in \{2, 4, 8, 16, 32, 64\}$, \chisao{} achieves \textbf{100\%} mode recovery where all CPU baselines collapse at $d \geq 8$ on the hardest multimodal functions, at up to \textbf{$34\times$} speedup over basin-hopping on functions where all methods succeed (Michalewicz $d=64$) and up to \textbf{$39\times$} on unimodal functions (Rotated Hyper-Ellipsoid $d=64$, pure GPU dividend). All benchmarks evaluate the objective by value alone---gradients come from finite differences---so the reported speedups are a derivative-free worst case. Under substantial likelihood noise ($\sigma_{\mathrm{noise}}$ up to 1.0), mode detection remains 100\% reliable. The algorithm is available as a standalone open-source Python package on PyPI.
\end{abstract}

\medskip\noindent\textbf{Keywords:} multimodal optimization, GPU computing, black-box optimization, population methods, mode finding, parallel optimization

%% ============================================================
\section{Introduction}
\label{sec:intro}
%% ============================================================

Multimodal optimization---finding all significant maxima of a function $f: \RR^d \to \RR$ accessible only through black-box evaluations---arises across scientific computing, probabilistic inference, and machine learning. The challenge is not merely finding a global optimum but \emph{cataloguing all modes}, since downstream tasks (Bayesian model averaging, multistart refinement, mixture fitting) require knowledge of the full mode structure.

Classical approaches fall into two families. \emph{Sequential} methods such as basin-hopping \citep{wales1997global} and simulated annealing \citep{kirkpatrick1983optimization} perturb a single solution and accept or reject moves stochastically; they explore one trajectory at a time and are inherently sequential. \emph{Population} methods such as CMA-ES \citep{hansen2001completely} and particle swarm optimization \citep{kennedy1995particle} maintain a set of candidate solutions but typically converge the entire population toward a single basin, losing mode diversity. Multistart gradient descent runs multiple independent optimization trajectories but these are embarrassingly parallel only in the trivial sense---each trajectory is independent and discovers at most one mode.

None of these is built for the architecture of modern GPUs: thousands of arithmetic units running the same instruction on different data simultaneously. On a GPU with $P > 10^4$ parallel cores, a batch of $N \leq P$ gradient evaluations completes in the same wall-clock time as a single evaluation. Sequential methods waste this capacity entirely; population methods use it only within a single generation step.

\chisao{}\footnote{Wing Chun \emph{Ch\'{i} S\v{a}o} (``Sticky Hands''): in the martial art, sensitivity drills train simultaneous contact maintenance and redirection, precisely the explore-and-freeze asymmetry of the algorithm.} is a GPU-native population optimizer with three structural moves no prior method combines.

The first is a freeze-and-explore asymmetry. A sample that reaches a true peak, passing both a gradient-norm and a likelihood-quality test, is frozen from the exploration phases but continues to participate in gradient ascent for peak refinement. The rest of the population keeps moving. Standard population methods either advance all particles or none; \chisao{} advances only those that still have work to do.

The second is a deliberate anti-convergence phase. After each batched L-BFGS pass, unfrozen samples take momentum-based gradient \emph{descent} steps. This is neither noise injection (simulated annealing) nor random perturbation (basin-hopping); it is directed motion, the momentum term carrying samples across valleys into new basins.

The third is stochastic smoothing, applied as a Hands Like Clouds phase in which unfrozen samples ascend on a Gaussian-smoothed estimate of $\nabla f$. Smoothing at scale $\sigma$ erases sub-$\sigma$ texture and exposes the global basin geometry. HLC and anti-convergence are complementary rather than redundant: HLC steers samples toward promising regions on the smoothed landscape; anti-convergence then disperses them on the raw landscape.

Two reseeding strategies---Repulse Monkey when many samples remain unfrozen, Golden Rooster when most are frozen---close the cycle. The algorithm needs only batched function and gradient evaluations, maps directly onto GPU execution, and works as a drop-in exploration module for any optimizer that supplies an initial population.

\paragraph{Contributions.} The contribution is fourfold: the oscillation cycle as a strategy for multimodal optimization with a formalized freeze-and-explore asymmetry (Section~\ref{sec:algorithm}); convergence guarantees for log-concave targets and a coverage analysis for the multimodal case (Section~\ref{sec:theory}); benchmarks against Differential Evolution, Basin-Hopping, and CMA-ES on all 42 SFU functions across $d \in \{2,4,8,16,32,64\}$ (Section~\ref{sec:experiments}); and noise-robustness results showing 100\% mode detection up to $\sigma_{\mathrm{noise}} = 1.0$, the signal scale itself (Section~\ref{sec:noise}).

%% ============================================================
\section{Related Work}
\label{sec:related}
%% ============================================================

\paragraph{Basin-hopping.} Wales and Doye \citep{wales1997global} introduced basin-hopping as alternating random perturbations with local minimization, accepting moves via a Metropolis criterion. It is effective in low dimensions and has been widely used in computational chemistry \citep{wales2003energy}. Its fundamental limitation is sequential execution: one perturbation, one local minimization, one accept/reject decision per step. GPU parallelization is limited to the local minimization sub-problem, not the search strategy.

\paragraph{Simulated annealing.} Kirkpatrick et al.\ \citep{kirkpatrick1983optimization} introduced temperature-based stochastic acceptance, which provides theoretical convergence guarantees under slow cooling \citep{hajek1988cooling} but requires an exponentially slow schedule to avoid premature convergence in practice. Parallel implementations \citep{ram1996parallel} distribute independent chains but do not share information, losing the benefit of population diversity.

\paragraph{Evolutionary and population methods.} CMA-ES \citep{hansen2001completely} adapts a full covariance matrix to the population geometry, achieving state-of-the-art performance on unimodal and mildly multimodal problems. Niching extensions \citep{preuss2015multimodal} attempt to maintain diversity but add significant complexity. Particle swarm optimization \citep{kennedy1995particle} uses social attraction toward the population best, which concentrates the population at a single basin in multimodal settings unless modified with repulsion terms \citep{brits2002niching}. Differential evolution \citep{storn1997differential} generates candidates by combining population members but shares CMA-ES's tendency toward mode collapse.

\paragraph{Multistart methods.} Running multiple independent gradient-based optimizers from random initial points is the most common approach in practice \citep{marti2003multi}. It parallelizes trivially but has two weaknesses: restarts that converge to the same mode waste computation, and there is no mechanism to ensure the population maintains diversity after convergence begins. \chisao{}'s deduplication and reseeding directly address both.

\paragraph{Bayesian optimization.} Gaussian-process surrogate methods \citep{shahriari2016taking} target the same black-box setting from a serial angle: each acquisition step refits the surrogate against all prior evaluations and then optimises the acquisition function. The design objective is locating the single best point at minimum evaluation budget, not cataloguing modes, and the per-step surrogate cost grows poorly with population size.

\paragraph{GPU-parallel optimization.} GPU acceleration of optimization has focused primarily on stochastic gradient descent for deep learning \citep{goodfellow2016deep}, where the objective is unimodal and the bottleneck is data throughput rather than exploration. For black-box multimodal optimization, GPU acceleration is largely unexplored, with \citet{salimans2017evolution} the major exception---evolution strategies massively parallelized for reinforcement learning, but with a single-objective design that does not catalogue modes. \citet{albert2023jaxns} parallelized nested sampling on GPU (JAXNS) but the sequential compression structure of nested sampling limits GPU utilization to within-iteration batching. \chisao{} appears to be the first black-box multimodal optimizer with full GPU utilization across the search strategy itself, not only inside individual iterations.

\paragraph{Smoothing and continuation methods.} Gaussian smoothing of objective functions as a heuristic for escaping local optima has been studied under the name ``diffusion'' or ``graduated non-convexity'' \citep{blake1987visual, mobahi2015link}. \citet{nesterov2017random} analyzed Gaussian smoothing for convex optimization. The Hands Like Clouds phase of \chisao{} applies smoothing as one phase within a structured oscillation cycle, rather than as a global preprocessing step or annealing schedule.

\paragraph{Position relative to prior work.} \chisao{} differs from every method above on at least one structural axis: directional anti-convergence rather than random perturbation, stochastic smoothing within a cycle rather than a global annealing schedule, freeze-and-explore asymmetry rather than uniform population movement, and full-batch GPU execution rather than within-step parallelism. Basin-hopping with restarts \citep{wales1997global} is the closest neighbour; it differs on all four.

%% ============================================================
\section{Algorithm}
\label{sec:algorithm}
%% ============================================================

\subsection{Problem Statement}

Let $f: \Theta \to \RR$ be a black-box function on a bounded domain $\Theta \subset \RR^d$, accessible through evaluations $f(\theta)$; the gradients $\nabla f(\theta)$ the algorithm uses are computed by finite differences, or supplied analytically when available. We seek the set of all \emph{significant modes}:
\begin{equation}
    \mathcal{M}^* = \left\{ \theta^* \in \Theta : \nabla f(\theta^*) = 0,\; \nabla^2 f(\theta^*) \prec 0,\; f(\theta^*) \geq f_{\max} + \log \delta \right\}
\end{equation}
where $f_{\max} = \max_\theta f(\theta)$ and $\delta \in (0,1)$ is a quality threshold (default $\delta = 0.1$, i.e., modes within one decade of the global maximum). The goal is not merely to find the global maximum but to identify all members of $\mathcal{M}^*$.

\chisao{} takes as input a population $\{x_i\}_{i=1}^N \subset \Theta$ of initial candidate points (arbitrary; provided by the caller), and returns the estimated set $\hat{\mathcal{M}}^* \subseteq \{x_i\}$.

\subsection{The Oscillation Cycle}

\chisao{} runs for $n_{\mathrm{osc}}$ oscillation cycles (default 3). Each cycle consists of six phases executed in fixed order. The ordering is deliberate and is justified in Section~\ref{sec:rationale}.

\subsubsection{Phase 1: Convergence}

All stuck masks are released (every sample is temporarily unfrozen) and the entire population undergoes $n_{\mathrm{conv}}$ steps of batched L-BFGS \citep{liu1989lbfgs,nocedalwright2006} toward local maxima of $f$. All $N$ samples optimize simultaneously in a single GPU batch:
\begin{equation}
    x_i \gets \mathrm{L\text{-}BFGS}(x_i, n_{\mathrm{conv}}) \quad \forall i = 1, \ldots, N \quad \text{(GPU-parallel)}
\end{equation}
The L-BFGS memory parameter is $m = 10$ by default. Iteration count adapts to dimension: $n_{\mathrm{conv}} = \max(10, 3\log_2 d)$.

\subsubsection{Phase 2: Stick Detection}

After convergence, samples that have reached true local maxima are marked as \emph{stuck}:
\begin{equation}
    \mathrm{stuck}_i \gets \left( \|\nabla f(x_i)\|_\infty < \epsilon_{\mathrm{grad}} \right) \wedge \left( f(x_i) \geq f_{\max} + \log \delta \right)
\end{equation}
The gradient threshold $\epsilon_{\mathrm{grad}}$ (default $10^{-6}$) ensures the sample has reached a stationary point. The likelihood threshold $\log \delta$ (default $\log 0.1$) prevents sticking at low-quality local maxima that fall outside $\mathcal{M}^*$. Stuck samples are excluded from exploration phases but continue to participate in gradient ascent for peak refinement.

\subsubsection{Phase 3: Deduplication}

Stuck samples are deduplicated using the $L_\infty$ metric (see Appendix~\ref{app:linf}):
\begin{equation}
    \text{remove } x_j \text{ if } \exists\, x_i \neq x_j : \|x_i - x_j\|_\infty < \epsilon_{\mathrm{dup}},\; f(x_i) \geq f(x_j)
\end{equation}
The sample with the higher function value survives; its inverse Hessian estimate is preserved for width estimation. The number of removed duplicates $K_{\mathrm{lost}}$ is recorded.

\subsubsection{Phase 4: Reseeding (Repulse Monkey / Golden Rooster)}

If $K_{\mathrm{lost}} > 0$ and this is not the last oscillation, the population is maintained at size $N$ by reseeding. Two strategies are used depending on exhaustion state.

\paragraph{Repulse Monkey.} When $\geq 5$ unfrozen samples remain, new samples are generated by shooting rays from unfrozen samples in random directions, sampling uniformly along those rays within $\Theta$. This disperses new candidates away from known peaks.

\paragraph{Golden Rooster.} When $< 5$ unfrozen samples remain (near exhaustion), new samples are generated from confirmed peaks using orthonormal ray directions obtained via QR decomposition of a random matrix. This exploits the GPU's parallel capacity to systematically probe orthogonal directions from known peaks. Golden Rooster adaptively disables itself if successive generations discover no new peaks, preventing wasted computation.

All reseeded samples are marked as unstuck to enable exploration.

\subsubsection{Phase 5: Hands Like Clouds}

If this is not the last oscillation, unfrozen samples take $n_{\mathrm{cloud}}$ gradient \emph{ascent} steps using a stochastically smoothed gradient:
\begin{equation}
    \nabla f_\sigma(\theta) \approx \frac{1}{K} \sum_{k=1}^{K} \nabla f(\theta + \sigma z_k), \quad z_k \sim \mathcal{N}(0, I_d)
    \label{eq:smoothed-grad}
\end{equation}
Smoothing at scale $\sigma$ blurs features smaller than $\sigma$ while preserving global basin structure. For multi-scale problems (Rastrigin, Ackley), this allows samples to ascend toward global basins through local ripples. The smoothing scale $\sigma$ is auto-estimated from sample spread: $\sigma = c \cdot \mathrm{std}(\{x_i : \neg\mathrm{stuck}_i\})$ with $c = 0.1$ by default, or supplied by the caller. Stuck samples are excluded from this phase.

\subsubsection{Phase 6: Anti-Convergence}

Unfrozen samples receive $n_{\mathrm{anti}}$ momentum-based gradient \emph{descent} steps:
\begin{equation}
    v_{t+1} = \mu \cdot v_t - \alpha \cdot \nabla f(x_t), \qquad x_{t+1} = x_t + v_{t+1}
    \label{eq:anticonverge}
\end{equation}
with momentum $\mu = 0.9$ and step size $\alpha$ (auto-tuned from scale estimates or provided by caller). Descent in the gradient direction moves samples away from current peaks; the momentum term carries samples across valleys and into new basins. Stuck samples are excluded from this phase.

Phases 4--6 are skipped on the last oscillation ($k = n_{\mathrm{osc}}$), so that the final convergence phase yields clean output peaks without further perturbation.

\subsection{Design Rationale}
\label{sec:rationale}

The phase ordering is not arbitrary. Deduplication (Phase 3) follows convergence (Phase 1) because samples are maximally clustered at confirmed peaks after convergence, making duplicate detection most effective. Reseeding (Phase 4) precedes smoothing (Phase 5) so that newly injected samples benefit from the smoothed-gradient phase. Smoothing (Phase 5) precedes anti-convergence (Phase 6) to nudge samples toward global basins before momentum-based exploration begins; reversing this order would apply smoothing to already-dispersed samples, wasting its effect. Anti-convergence (Phase 6) is the last exploration step so that samples enter the next convergence phase from maximally diverse positions.

The release of all stuck masks at the start of Phase 1 (``Release all'') is also deliberate: it allows samples that were previously confirmed but may have been perturbed by reseeding to re-converge from their current positions, rather than remaining locked at stale peak estimates.

\subsection{Complete Algorithm}

\begin{algorithm}[H]
\caption{\chisao{}: Convergence-Anticonvergence Oscillation Optimizer}
\label{alg:chisao}
\begin{algorithmic}[1]
\Require Initial population $\{x_i\}_{i=1}^N \subset \Theta$, function $f$ (gradients $\nabla f$ by finite differences, or analytic if supplied)
\Require Parameters: $n_{\mathrm{osc}}, n_{\mathrm{conv}}, n_{\mathrm{anti}}, n_{\mathrm{cloud}}, \epsilon_{\mathrm{grad}}, \epsilon_{\mathrm{dup}}, \delta, \mu, \alpha, \sigma$
\Ensure Estimated mode set $\hat{\mathcal{M}}^*$

\State $\mathrm{stuck} \gets \mathbf{False}^N$
\For{$k = 1, \ldots, n_{\mathrm{osc}}$}
    \State $\mathrm{stuck} \gets \mathbf{False}^N$ \Comment{Phase 1: release all, then converge}
    \State $x, \nabla f(x) \gets \mathrm{L\text{-}BFGS\_batch}(x,\, n_{\mathrm{conv}})$
    \State $f_{\max} \gets \max_i f(x_i)$
    \State $\mathrm{stuck}_i \gets \bigl(\|\nabla f(x_i)\|_\infty < \epsilon_{\mathrm{grad}}\bigr) \wedge \bigl(f(x_i) \geq f_{\max} + \log\delta\bigr)$ \Comment{Phase 2}
    \State $x,\, K_{\mathrm{lost}} \gets \mathrm{deduplicate}(x,\, \mathrm{stuck},\, \epsilon_{\mathrm{dup}})$ \Comment{Phase 3}
    \If{$k < n_{\mathrm{osc}}$}
        \State $x \gets \mathrm{reseed}(x,\, \mathrm{stuck},\, K_{\mathrm{lost}},\, \Theta)$ \Comment{Phase 4: Repulse Monkey / Golden Rooster}
        \State $x[\neg\mathrm{stuck}] \gets \mathrm{HLC}(x[\neg\mathrm{stuck}],\, n_{\mathrm{cloud}},\, \sigma)$ \Comment{Phase 5: Hands Like Clouds}
        \State $x[\neg\mathrm{stuck}] \gets \mathrm{anti\_converge}(x[\neg\mathrm{stuck}],\, n_{\mathrm{anti}},\, \mu,\, \alpha)$ \Comment{Phase 6}
    \EndIf
\EndFor
\State \Return $\{x_i : \mathrm{stuck}_i\}$
\end{algorithmic}
\end{algorithm}

\subsection{GPU Execution Model}

Every operation in Algorithm~\ref{alg:chisao} vectorizes over the sample index $i$. Let $P$ denote the number of GPU parallel processors (NVIDIA RTX 3080: $P = 8704$ CUDA cores). For a population of size $N$:

\begin{center}
\begin{tabular}{l|c|c}
\textbf{Operation} & \textbf{Sequential (CPU)} & \textbf{GPU Wall-clock} \\
\hline
Gradient eval ($N$ samples) & $O(N \cdot d)$ & $O(\lceil Nd/P \rceil)$ \\
L-BFGS step ($N$ samples) & $O(N \cdot md)$ & $O(\lceil Nmd/P \rceil)$ \\
Smoothed gradient ($N \cdot K$ evals) & $O(NKd)$ & $O(\lceil NKd/P \rceil)$ \\
Deduplication ($N^2$ comparisons) & $O(N^2)$ & $O(\lceil N^2/P \rceil)$ \\
\end{tabular}
\end{center}

For $N = 512$ and $d \leq 128$, all operations complete in $O(1)$ effective batches, yielding constant wall-clock time per oscillation cycle regardless of population size up to GPU saturation.

\subsection{Hyperparameters}

\chisao{} has several hyperparameters. Table~\ref{tab:hyperparams} lists defaults and sensitivity range (see Appendix~\ref{app:hyperparams} for full sensitivity analysis).

\begin{table}[htbp]
\centering
\small
\begin{tabular}{lccl}
\toprule
Parameter & Symbol & Default & Role \\
\midrule
Oscillation cycles & $n_{\mathrm{osc}}$ & 3 & Exploration depth \\
Convergence steps & $n_{\mathrm{conv}}$ & $\max(10, 3\log_2 d)$ & Local refinement \\
Anti-convergence steps & $n_{\mathrm{anti}}$ & 5 & Escape momentum \\
Cloud steps & $n_{\mathrm{cloud}}$ & 3 & Smoothed ascent \\
Cloud samples & $K$ & 10 & Smoothing fidelity \\
Gradient threshold & $\epsilon_{\mathrm{grad}}$ & $10^{-6}$ & Stick detection \\
Dedup threshold & $\epsilon_{\mathrm{dup}}$ & $10^{-3}$ & Duplicate removal \\
Quality threshold & $\delta$ & $0.1$ & Mode significance \\
Momentum & $\mu$ & $0.9$ & Anti-convergence \\
Smoothing scale & $\sigma$ & auto & Hands Like Clouds \\
\bottomrule
\end{tabular}
\caption{ChiSao hyperparameters with defaults. All defaults were fixed before benchmarking and not tuned per-problem.}
\label{tab:hyperparams}
\end{table}

%% ============================================================
\section{Theoretical Analysis}
\label{sec:theory}
%% ============================================================

\subsection{Log-Concave Case}

\begin{proposition}[Convergence for log-concave $f$]
\label{prop:logconcave}
Let $f: \Theta \to \RR$ be strictly log-concave on a convex compact domain $\Theta \subset \RR^d$ with unique maximum $\theta^* \in \mathrm{int}(\Theta)$. Then for any initial population $\{x_i\}_{i=1}^N$, after one oscillation cycle with sufficient $n_{\mathrm{conv}}$, at least one sample satisfies $\|x_i - \theta^*\| < \epsilon$ with probability 1, and that sample is marked stuck.
\end{proposition}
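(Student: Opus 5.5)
The plan is to reduce the statement to the convergence of a single L-BFGS trajectory in Phase~1. After that, I check that the Phase~2 test fires for that trajectory and that Phase~3 cannot remove the resulting sample. Phase~1 first releases all stuck masks and runs batched L-BFGS on every sample. Since the batch runs independently per sample, it suffices to follow one arbitrary sample $x_1$.

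I will work with $g = -\log f$ (or $-f$ if $f$ itself is meant to be concave, which is how the paper's thresholds $f_{\max}+\log\delta$ read). Then $g$ is strictly convex on the compact convex set $\Theta$, with unique minimizer $\theta^*$ in the interior. The sublevel set $\{g \le g(x_1)\}$ is compact and convex. Because $\theta^*$ is interior, I restrict to a neighbourhood where the iterates stay, granting the tacit assumption that L-BFGS uses a line search that keeps iterates in $\Theta$. On this set I will assume the standard hypotheses of the L-BFGS convergence theorem \citep{liu1989lbfgs,nocedalwright2006}: $g$ is twice continuously differentiable, and $\nabla^2 g$ is uniformly positive definite on the level set. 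The second hypothesis is the quantitative form of strict log-concavity that the argument needs.

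Under these hypotheses, L-BFGS with a Wolfe line search converges R-linearly to the unique minimizer: $\|x_1^{(t)}-\theta^*\| \le C r^t$ with $r<1$. Hence there is $n_0$ such that for $n_{\mathrm{conv}}\ge n_0$ we get $\|x_1-\theta^*\|<\epsilon$. The same Lipschitz bound on $\nabla g$ gives $\|\nabla f(x_1)\|_\infty<\epsilon_{\mathrm{grad}}$. This is deterministic, so ``with probability 1'' holds trivially for Phase~1; the randomness in reseeding and HLC plays no role in a single cycle's Phase~1--2.

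For the stick test, $f_{\max}$ in Algorithm~\ref{alg:chisao} is the population maximum $\max_i f(x_i)$. Take $i^\ast = \arg\max_i f(x_i)$ after Phase~1. Its value is at least $f(x_1)$, so by monotone descent of the line search it is close to $f(\theta^*)$. For that sample, the quality condition $f(x_{i^\ast})\ge f_{\max}+\log\delta$ holds automatically because $\log\delta<0$. I therefore need convergence of $x_{i^\ast}$ itself, which holds because every sample converges for $n_{\mathrm{conv}}$ large enough: take the max of finitely many $n_0$'s, one per sample.

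Deduplication removes a stuck sample only if another stuck sample within $\epsilon_{\mathrm{dup}}$ has at least its value. The survivor is then also within $\epsilon+\epsilon_{\mathrm{dup}}$ of $\theta^*$. To handle this I shrink $\epsilon$, or note that the survivor is itself one of the converged samples and so also within $\epsilon$. Either way, at least one stuck sample within $\epsilon$ survives.

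The main obstacle is the gap between ``strictly log-concave'' and the hypotheses of the L-BFGS theorem. Strict concavity alone gives no uniform Hessian bound, for example near a degenerate maximum. A second gap is the boundary of $\Theta$, since L-BFGS is unconstrained. I will state these as explicit regularity assumptions: $C^2$ smoothness, a positive-definite Hessian at $\theta^*$ (hence, by compactness, on the relevant level set), and a line search that keeps iterates in $\Theta$. Under these assumptions the citation closes the argument. If I want to avoid the uniform Hessian assumption, the fallback is the global convergence result for Wolfe line-search quasi-Newton methods on convex functions. That result gives $\nabla g(x^{(t)})\to 0$, and with a unique minimizer and a compact level set this yields $x^{(t)}\to\theta^*$ without a rate. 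That is enough for the existence of a finite $n_0$.
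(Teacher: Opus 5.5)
Your argument is the paper's own: each sample's L-BFGS trajectory converges to the unique stationary point $\theta^*$, and then the gradient and quality tests are checked. You are more careful than the sketch in the right places. You notice that the cited L-BFGS theorem needs uniform convexity, which strict log-concavity does not provide; the paper's ``same argument applied to $-f$'' passes over this. You also use the population maximum $f_{\max}=\max_i f(x_i)$ that Algorithm~\ref{alg:chisao} actually computes.

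The quality step is easier than you make it. Since $f_{\max}\le f(\theta^*)$ always, one converged sample with $f(x_1)\ge f(\theta^*)+\log\delta$ passes, so you need not make every sample converge.

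Three small completions are needed.
\begin{itemize}
\item State, as the paper does, that gradients are exact. The benchmarks use finite differences, and neither the L-BFGS theorem nor the test $\|\nabla f\|_\infty<\epsilon_{\mathrm{grad}}$ tolerates that automatically.
\item Note that Phases 4--6 never move a surviving stuck sample: reseeding only refills the $K_{\mathrm{lost}}$ removed slots, and Phases 5--6 act on $\neg\mathrm{stuck}$. This is what ``after one oscillation cycle'' needs.
\item Note that a deduplication survivor exists only under the tie-breaking implicit in ``the sample with the higher function value survives''. Read literally with $\ge$ and applied simultaneously, the Phase~3 rule deletes a whole cluster of equal values.
\end{itemize}

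The genuine flaw is in how you weaken the hypotheses at the end. A positive-definite Hessian at $\theta^*$ does not give, ``by compactness'', a uniform bound on the level set. Continuity carries it only to a neighbourhood of $\theta^*$, and a strictly convex $g$ may have a singular Hessian elsewhere on the level set. For example, $g(x)=(x-1)^4+4x$ on $[-3,3]$ has its minimizer at $0$ with $g''(0)=12$, yet $g''(1)=0$ and $1\in\{g\le g(2)\}$. The Liu--Nocedal theorem needs $M_1\|z\|^2\le z^\top\nabla^2 g(x)\,z\le M_2\|z\|^2$ at \emph{every} $x$ of the level set, which is the assumption you wrote first. Keep it as a hypothesis rather than deriving it from the behaviour at $\theta^*$.

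The fallback does not rescue the weaker version. The global convergence result for Wolfe line-search quasi-Newton methods on merely convex functions (Powell's $\liminf_t\|\nabla g(x^{(t)})\|=0$) is a full-memory BFGS result. It is proved by trace and determinant estimates accumulated over all updates. L-BFGS instead rebuilds its matrix from a scaled identity at each step, and its analysis bounds that matrix's conditioning through $s_k^\top y_k\ge M_1\|s_k\|^2$, with $s_k=x_{k+1}-x_k$ and $y_k=\nabla g(x_{k+1})-\nabla g(x_k)$. So the cited result does not cover the $m=10$ method the algorithm runs.

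Finally, commit to $g=-f$, i.e.\ read $f$ as a concave log-density, as the thresholds suggest. L-BFGS is not invariant under the monotone change $-f\mapsto-\log f$, so convergence of L-BFGS on $-\log f$ says nothing about \chisao{}'s iterates. Under the literal reading ($f$ log-concave but not concave), $-f$ is only quasi-convex, and neither your citation nor the paper's applies.
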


\begin{proof}[Proof sketch]
For strictly log-concave $f$, every stationary point of $f$ on $\Theta$ is the unique global maximum $\theta^*$ \citep{prekopa1973logarithmic}. L-BFGS with exact gradients converges to the unique stationary point for strongly convex objectives \citep{liu1989lbfgs,nocedalwright2006}; the log-concave case follows by the same argument applied to $-f$. The stick condition $\|\nabla f(x_i)\|_\infty < \epsilon_{\mathrm{grad}}$ is satisfied at convergence for sufficiently small step sizes. The quality threshold $f(x_i) \geq f_{\max} + \log\delta$ is satisfied at the global maximum with $f_{\max} = f(\theta^*)$.
\end{proof}

\begin{remark}
With a single mode, the anti-convergence phase has nothing to do. The first convergence phase finds the peak, every sample sticks, and the cycle collapses to a single convergence pass.
\end{remark}

\subsection{Multimodal Case: Coverage Analysis}

For non-log-concave $f$ with $K^*$ significant modes $\{\theta^*_k\}_{k=1}^{K^*}$, we analyze the probability that all modes are discovered.

Let $\mathcal{B}_k = \{\theta : \theta^*_k = \argmax_{\theta'} f(\theta'), \mathrm{L\text{-}BFGS}(\theta) \to \theta^*_k\}$ denote the \emph{basin of attraction} of mode $k$ under L-BFGS, with volume $V_k = \mathrm{vol}(\mathcal{B}_k \cap \Theta)$ and total basin volume $V_{\mathrm{basin}} = \sum_k V_k$.

\begin{proposition}[Mode coverage probability]
\label{prop:coverage}
Let the initial population $\{x_i\}_{i=1}^N$ be drawn i.i.d.\ uniformly from $\Theta$, and suppose $V_{\mathrm{basin}} / \mathrm{vol}(\Theta) = \rho > 0$ (basins cover a $\rho$ fraction of the domain). The probability that mode $k$ is not discovered in the first convergence phase is:
\begin{equation}
    P(\text{miss mode } k) = \left(1 - \frac{V_k}{\mathrm{vol}(\Theta)}\right)^N \leq (1 - \rho_{\min})^N
\end{equation}
where $\rho_{\min} = \min_k V_k / \mathrm{vol}(\Theta)$. For $N \geq \log(K^*/\alpha) / \rho_{\min}$, all $K^*$ modes are discovered with probability at least $1 - \alpha$ by union bound.
\end{proposition}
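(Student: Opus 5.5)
The plan is to reduce the statement to an elementary computation about i.i.d.\ uniform draws. The modelling step that links sampling to discovery is the following. Mode $k$ counts as \emph{discovered} in the first convergence phase if and only if at least one initial sample $x_i$ lies in $\mathcal{B}_k \cap \Theta$. By definition of $\mathcal{B}_k$ as the L-BFGS basin of attraction, such a sample converges to $\theta^*_k$. Conversely, a sample outside $\mathcal{B}_k$ converges elsewhere or not at all. I will take "sufficient $n_{\mathrm{conv}}$" as in Proposition~\ref{prop:logconcave}, so that convergence within the basin is actually reached and the stick test is passed. This treats L-BFGS as a deterministic map $x \mapsto \lim \mathrm{L\text{-}BFGS}(x)$, so $\mathcal{B}_k$ is a fixed measurable set. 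I will state measurability as an assumption, since the paper does not prove it.

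The computation then proceeds as follows.
\begin{itemize}
\item Since $x_i \sim \mathrm{Unif}(\Theta)$, we have $P(x_i \in \mathcal{B}_k) = V_k/\mathrm{vol}(\Theta) =: p_k$.
\item By independence of the $N$ draws, $P(\text{no } x_i \in \mathcal{B}_k) = (1-p_k)^N$. This is the displayed equality.
\item Since $p_k \ge \rho_{\min}$ and $t \mapsto (1-t)^N$ is decreasing on $[0,1]$, we get $(1-p_k)^N \le (1-\rho_{\min})^N$.
\item Using $1-t \le e^{-t}$ gives $(1-\rho_{\min})^N \le e^{-N\rho_{\min}}$.
\item A union bound over the $K^*$ modes gives $P(\text{some mode missed}) \le K^* e^{-N\rho_{\min}}$.
\item This is at most $\alpha$ exactly when $N \ge \log(K^*/\alpha)/\rho_{\min}$, which yields the final claim.
\end{itemize}
The hypothesis $\rho > 0$ is not used directly. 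I only need $\rho_{\min} > 0$, and I would remark on this.

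The main obstacle is the modelling step, not the arithmetic. There are three points to handle.
\begin{itemize}
\item \emph{Well-definedness of the basins.} The basins must be well defined and disjoint. Disjointness holds automatically, since a deterministic iteration has a single limit point. The paper's definition of $\mathcal{B}_k$ contains an odd clause $\theta^*_k = \argmax f$; I would read it simply as "the L-BFGS limit from $\theta$ is $\theta^*_k$".
\item \emph{Sticking.} Reaching $\theta^*_k$ must imply being marked stuck. The gradient test holds at convergence. The quality test $f(\theta^*_k) \ge f_{\max} + \log\delta$ holds because $\theta^*_k$ is a significant mode and $f_{\max}$ computed from the population never exceeds the true maximum.
\item \emph{Deduplication.} Phase~3 keeps one representative per peak, so discovery is not undone.
\end{itemize}
I would present the result as a lower bound on discovery probability, noting that the event "no sample in $\mathcal{B}_k$" contains the event "mode $k$ missed". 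Equality is claimed only under the idealized deterministic-basin model, and I would state that explicitly.
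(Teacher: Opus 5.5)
Your proposal is correct and takes the same route the statement itself indicates; the paper gives no separate proof beyond ``by union bound''. The route is the per-sample hitting probability $V_k/\mathrm{vol}(\Theta)$, independence, monotonicity in $\rho_{\min}$, and a union bound, and your explicit inequality $(1-\rho_{\min})^N \le e^{-N\rho_{\min}}$ supplies the step that the threshold $N \ge \log(K^*/\alpha)/\rho_{\min}$ tacitly relies on.

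One refinement to your caveats: ``sufficient $n_{\mathrm{conv}}$'' generally cannot be chosen uniformly over $\mathcal{B}_k$, since starts near the basin boundary can need unboundedly many iterations. So your inclusion $\{\text{mode } k \text{ missed}\} \subseteq \{\text{no } x_i \in \mathcal{B}_k\}$ is no more robust than the equality. It too needs the $n_{\mathrm{conv}} \to \infty$ idealization, or else $\mathcal{B}_k$ must be replaced by the set of starts captured within $n_{\mathrm{conv}}$ steps.
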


\begin{remark}
The anti-convergence and reseeding phases increase effective coverage beyond the first-pass guarantee. After deduplication and reseeding, unfrozen samples explore new regions; their coverage after the Hands Like Clouds and anti-convergence phases is not uniform but is concentrated in previously unvisited basins. A full coverage analysis for subsequent oscillation cycles requires a model of basin visitation under momentum dynamics, which we leave to future work.
\end{remark}

\subsection{GPU Complexity}

\begin{proposition}[Wall-clock complexity]
\label{prop:complexity}
Let $P$ denote GPU parallelism, $N$ population size, $d$ dimension, $K^*$ number of modes, and $T_f$ the wall-clock cost of a single function evaluation. The total wall-clock cost of \chisao{} with $n_{\mathrm{osc}}$ oscillations is:
\begin{equation}
    T_{\chisao} = O\!\left(n_{\mathrm{osc}} \cdot \left(n_{\mathrm{conv}} + n_{\mathrm{anti}} + n_{\mathrm{cloud}} K\right) \cdot \left\lceil \frac{Nd}{P} \right\rceil \cdot T_f \right)
\end{equation}
In the GPU-saturated regime ($Nd \leq P$), this reduces to $O(n_{\mathrm{osc}} \cdot n_{\mathrm{conv}} \cdot T_f)$---independent of both $N$ and $d$.
\end{proposition}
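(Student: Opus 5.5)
The plan is to bound the wall-clock cost one oscillation cycle at a time, multiply by $n_{\mathrm{osc}}$, and then specialise to the saturated regime. The cost model comes from the GPU execution table: a batched operation whose total work is $W$ independent scalar tasks takes $\lceil W/P\rceil$ sequential rounds, and each round costs $O(T_f)$. Here $T_f$ is the cost of one function evaluation, which we assume dominates the per-task arithmetic. A finite-difference gradient of all $N$ samples consists of $O(Nd)$ independent evaluations, so it takes $O(\lceil Nd/P\rceil\, T_f)$ wall-clock time. Each phase is then expressed as a count of such batched gradient passes.

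\textbf{Phase-by-phase accounting.}
\begin{itemize}
\item Phase 1 performs $n_{\mathrm{conv}}$ L-BFGS iterations. Each iteration needs one batched gradient plus a constant number of line-search evaluations. The two-loop recursion costs $O(md)$ arithmetic per sample. We absorb this into $T_f$ by treating $m$ as a constant, or by assuming $\lceil Nmd/P\rceil\,t_{\mathrm{flop}} \le \lceil Nd/P\rceil T_f$. This gives $O(n_{\mathrm{conv}}\lceil Nd/P\rceil T_f)$.
\item Phase 2 (stick detection) reuses the gradients already computed and adds only $O(\lceil Nd/P\rceil)$ reductions.
\item Phase 3 (deduplication) costs $O(\lceil N^2 d/P\rceil)$ cheap comparisons. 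I would state the lower-order assumption $N\le P$, or $N$ bounded by the evaluation cost, so that this term is dominated.
\item Phase 4 (reseeding) is ray sampling. The QR decomposition of a $d\times d$ matrix is a lower-order term for $d\le 128$.
\item Phase 5 (Hands Like Clouds) takes $n_{\mathrm{cloud}}$ steps, each averaging $K$ perturbed gradients, giving $O(n_{\mathrm{cloud}}K\lceil Nd/P\rceil T_f)$. The $K$ copies are batched together, and $\lceil NKd/P\rceil\le K\lceil Nd/P\rceil$.
\item Phase 6 (anti-convergence) takes $n_{\mathrm{anti}}$ momentum steps, each one gradient pass.
\end{itemize}

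Summing the phases and multiplying by $n_{\mathrm{osc}}$ gives the displayed bound.

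\textbf{Saturated regime.} When $Nd\le P$, every ceiling equals $1$, which removes the dependence on $N$ and $d$. The remaining factor is $n_{\mathrm{conv}}+n_{\mathrm{anti}}+n_{\mathrm{cloud}}K$. With the defaults $n_{\mathrm{anti}}=5$, $n_{\mathrm{cloud}}=3$ and $K=10$, these are fixed constants and $n_{\mathrm{conv}}\ge 10$, so the whole factor is $O(n_{\mathrm{conv}})$, giving $O(n_{\mathrm{osc}}n_{\mathrm{conv}}T_f)$.

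One caveat must be stated explicitly. The default $n_{\mathrm{conv}}=\max(10,3\log_2 d)$ depends on $d$, so "independent of $d$" should be read as treating $n_{\mathrm{conv}}$ as a parameter. The dependence is only logarithmic in any case.

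\textbf{Main obstacle.} The hard part is making the cost model honest rather than the arithmetic. Three hidden assumptions need to be justified or stated as hypotheses:
\begin{itemize}
\item A per-round cost of $O(T_f)$ presumes that evaluations of $f$ are themselves unit-width parallel tasks.
\item The L-BFGS line search must use a bounded number of evaluations per iteration.
\item The $N^2$ deduplication and the L-BFGS memory term must be lower order.
\end{itemize}
I would try first to make these explicit: add "line search with $O(1)$ trials" and "$m=O(1)$, $N\le P$" as standing assumptions inherited from the GPU execution table. With those in place, each phase bound is a direct one-line consequence of the table.
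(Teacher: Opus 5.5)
Your phase-by-phase accounting is the argument the paper relies on. The paper gives no separate proof, only the GPU execution-model table, and you make that reasoning explicit along with caveats the paper glosses over: the $\log_2 d$ growth of $n_{\mathrm{conv}}$, and the fact that $n_{\mathrm{anti}}+n_{\mathrm{cloud}}K$ is absorbed into $O(n_{\mathrm{conv}})$ only because the defaults are constants.

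One correction is needed. The standing assumption you finally adopt, $N\le P$, does not make deduplication lower order. Even in the saturated regime $Nd\le P$ it leaves up to $\lceil N^2d/P\rceil\le N$ comparison rounds, which grows with $N$. You need your other option instead, $N\,t_{\mathrm{cmp}}=O(T_f)$, or else $N^2d\le P$, or the $O(N)$ spatial-hashing deduplication of Appendix~\ref{app:linf}.
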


Increasing the population size $N$ therefore costs no additional wall-clock time until the GPU is saturated ($N > P/d$). For $d = 64$ on the RTX 3080 ($P \approx 8704$) the threshold is $N \approx 136$; the default $N = 512$ is at the threshold for this dimension. For $d = 2$, populations as large as $N = 4352$ are free.

\subsection{Limitations of the Analysis}

The convergence guarantee (Proposition~\ref{prop:logconcave}) requires log-concavity, which fails for all genuinely multimodal functions. The coverage analysis (Proposition~\ref{prop:coverage}) requires that basin volumes are non-negligible and that the initial population is uniform; both assumptions can fail for pathological functions (exponentially small basins, degenerate geometry). The anti-convergence momentum dynamics (Phase 6) are not analyzed; we have no guarantee that they increase coverage beyond the first-pass uniform bound. We claim no more. The non-log-concave case rests on the empirical evidence of Section~\ref{sec:experiments}.

%% ============================================================
\section{Experiments}
\label{sec:experiments}
%% ============================================================

All experiments run on a single NVIDIA RTX 3080 Laptop GPU (8\,GB VRAM, 48 SMs, 8704 CUDA cores). CPU baselines run on Intel Core i9-12900H (14 cores, 20 threads). \chisao{} is implemented in Python using CuPy \citep{cupy2017} for GPU execution, with NumPy \citep{harris2020numpy} fallback. All hyperparameters use the defaults in Table~\ref{tab:hyperparams}; no per-problem tuning was performed. Every benchmark function is evaluated value-only: \chisao{} computes its gradients by finite differences, so every wall-clock time reported below is the derivative-free worst case \citep{rios2013derivative}. Analytic or automatic-differentiation gradients, when available, only reduce it.

\subsection{Benchmark Functions}

We evaluate on all 42 standard test functions from the Simon Fraser University optimization benchmark suite \citep{molga2005test}; see \citet{jamilyang2013survey} for the wider catalogue and \citet{li2013benchmark} for the CEC niching competition suite, spanning qualitatively distinct landscape types. Functions are organized into four groups (Table~\ref{tab:functions}): scalable multimodal (Group A, tested across $d \in \{2,4,8,16,32,64\}$), scalable bowl and valley (Group B, same dimensions), fixed-2D multimodal (Group C), and fixed low-dimensional structured functions (Group D). We use 10 independent trials per condition with independent random initial populations of size $N = 200$.

\begin{table}[htbp]
\centering
\small
\caption{All 42 benchmark functions, grouped by landscape character. Group A and B functions are tested at $d \in \{2,4,8,16,32,64\}$. Group C functions at $d=2$. Group D functions at their canonical dimension(s).}
\label{tab:functions}
\begin{tabular}{lrl}
\toprule
Function & $d$ tested & Character \\
\midrule
\multicolumn{3}{l}{\textit{Group A: Scalable multimodal}} \\
\midrule
Rastrigin              & 64 & Dense sinusoidal grid; $\approx(2\cdot5.12/0.5)^d$ local optima \\
Ackley                 & 64 & Flat outer region; exponentially isolated central basin \\
Schwefel               & 64 & Deceptive: global optimum far from nearest competitor \\
Griewank               & 64 & Widespread local minima with multiplicative coupling \\
Levy                   & 64 & Sinusoidally structured local minima \\
Styblinski-Tang        & 64 & Multiple local minima, quartic separable \\
Michalewicz            & 64 & $d!$ local minima, steep channel ridges ($m=10$) \\
\midrule
\multicolumn{3}{l}{\textit{Group B: Scalable bowl / valley}} \\
\midrule
Sphere                 & 64 & Strictly convex unimodal \\
Sum of Diff.\ Powers   & 64 & Unimodal, variable-exponent separable \\
Rosenbrock             & 64 & Narrow parabolic valley \\
Zakharov               & 64 & Unimodal, no spurious local minima \\
Dixon-Price            & 64 & Analytic global min at $x_i = 2^{-(2^i-2)/2^i}$ \\
Trid                   & 64 & Quadratic bowl; analytic min at $x_i = i(D+1-i)$, bounds $[-d^2,d^2]$ \\
Rotated Hyper-Ellipsoid& 64 & Unimodal bowl with increasing ellipsoidal ridge \\
Sum Squares            & 64 & Unimodal weighted separable \\
\midrule
\multicolumn{3}{l}{\textit{Group C: Fixed-2D multimodal}} \\
\midrule
Easom                  & 2  & Tiny isolated basin in large flat domain \\
Cross-in-Tray          & 2  & Four equivalent global optima \\
Drop Wave              & 2  & Wave-like multimodal structure \\
Eggholder              & 2  & Many sinusoidal local optima, difficult landscape \\
Holder Table           & 2  & Four equivalent global optima \\
Schaffer N.2           & 2  & Near-circular oscillating gradient \\
Schaffer N.4           & 2  & Two global optima near axes \\
Levy N.13              & 2  & Fixed-2D sinusoidal, asymmetric \\
Langermann             & 2  & Multimodal with damped oscillations \\
De Jong N.5            & 2  & 25 shallow wells \\
Shubert                & 2  & 18 equivalent global optima \\
Bukin N.6              & 2  & Narrow ridge along parabola \\
Bohachevsky            & 2  & Multimodal with oscillating cosine terms \\
\midrule
\multicolumn{3}{l}{\textit{Group D: Fixed low-$d$ structured (all methods 100\%; see text)}} \\
\midrule
Three-hump Camel       & 2  & Three local minima, one global \\
Six-hump Camel         & 2  & Six local minima, two global \\
Booth                  & 2  & Unimodal plate \\
Matyas                 & 2  & Nearly flat unimodal \\
McCormick              & 2  & Simple bimodal \\
Beale                  & 2  & Saddle-like, large flat regions \\
Branin                 & 2  & Three equivalent global optima \\
Goldstein-Price        & 2  & Dense multimodal \\
Hartmann 3             & 3  & Four local optima \\
Hartmann 4             & 4  & Four local optima \\
Hartmann 6             & 6  & Six local optima \\
Shekel                 & 4  & Parameterised multimodal wells \\
Colville               & 4  & Four-variable valley \\
Powell                 & 4--16 & Quadratic valley, multiple coupled axes \\
\bottomrule
\end{tabular}
\end{table}

\subsection{Metrics}
We report two metrics. \textbf{Mode recovery rate}: fraction of trials in which the global optimum is located within a function-specific $L_\infty$ tolerance (Table~\ref{tab:functions}). \textbf{Mean wall-clock time}: total runtime in seconds averaged over 10 trials.

\subsection{Baselines}
We compare \chisao{} against three standard baselines. \textbf{Differential Evolution} (DE): SciPy implementation \citep{virtanen2020scipy} with default strategy. \textbf{Basin-Hopping} (BH): SciPy implementation with step-size adaptation. \textbf{CMA-ES}: \texttt{cma} package with multi-start restarts. \chisao{} is evaluated with two seeding strategies: \textbf{random} (uniform random initialization) and \textbf{carry\_tiger} (Carry Tiger to Mountain: structured ray-based initialization from domain vertices, edges, and faces, matching the CarryTiger seeding of the SunBURST inference pipeline \citep{wolfson2025sunburst}). All methods use matched function evaluation budgets.

\begin{figure}[htbp]
\centering
\includegraphics[width=0.85\textwidth]{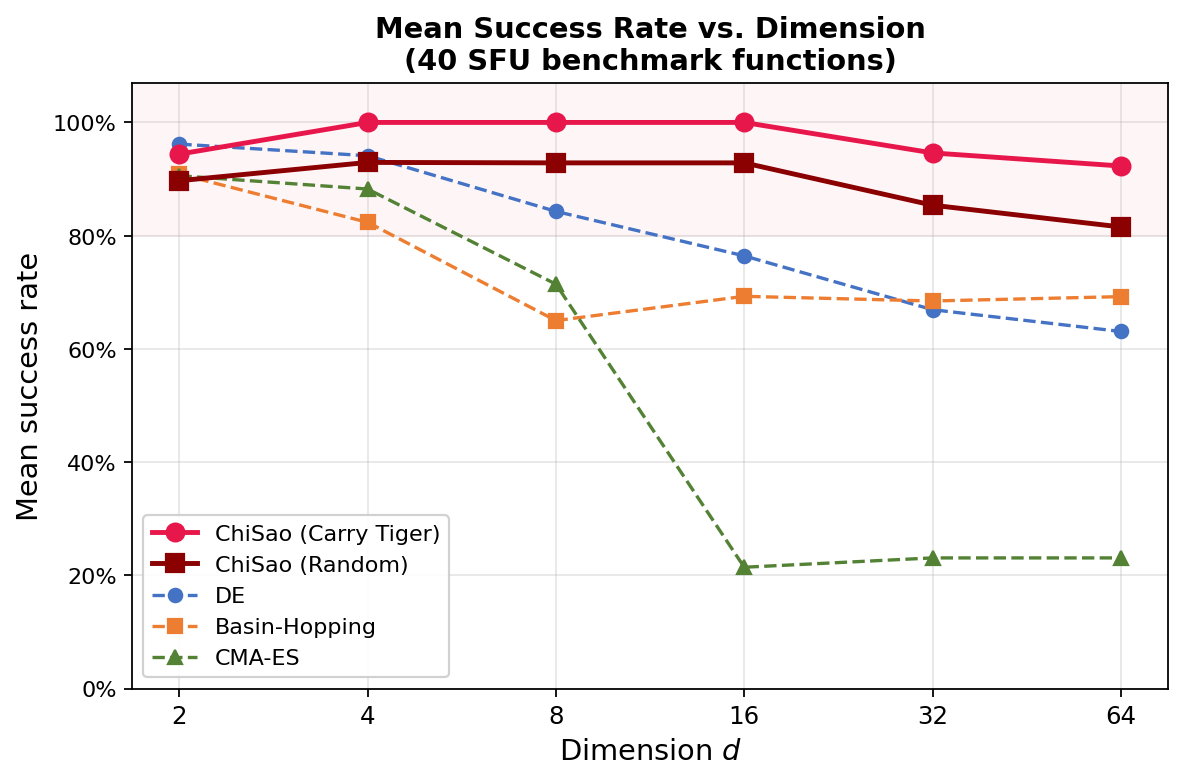}
\caption{Mean mode recovery rate vs.\ dimension $d$, averaged across all Group A and B functions. \chisao{} (both seeders, solid) degrades gracefully; baselines (dashed) collapse sharply at $d \geq 8$. The mean is depressed for all methods by Schwefel, which no method solves above $d=8$.}
\label{fig:success_rate}
\end{figure}

\subsection{Mode Recovery Results}

Tables~\ref{tab:recovery_multimodal}--\ref{tab:recovery_2d} report mode recovery rates for $d \in \{2, 8, 32, 64\}$ (10 trials per condition). Group D functions are omitted from these tables: all five methods achieve 100\% recovery on every Group D function at its canonical dimension, with no discriminating information. Full Group D results appear in the supplementary material.

\paragraph{Scalable multimodal functions (Group A).}
The dominant result is consistent 100\% recovery by both \chisao{} seeders on Rastrigin, Levy, Styblinski-Tang, and Michalewicz through $d = 64$. On Rastrigin, DE achieves 90\% at $d = 2$ but collapses entirely at $d \geq 8$; BH and CMA-ES follow the same pattern. On Levy and Styblinski-Tang, DE and CMA-ES remain competitive through $d = 8$ before collapsing, while both \chisao{} seeders maintain perfect recovery. Michalewicz---with $d!$ local minima and steep ridges---is the only scalable multimodal function where all five methods achieve 100\% at every dimension; the discriminator there is wall-clock time (Section~\ref{sec:wallclock}).

The Ackley result reveals a structural difference between the two seeding strategies. Random seeding achieves 80\% at $d = 4$ and collapses entirely at $d \geq 8$; carry\_tiger maintains 100\% from $d = 2$ through $d = 64$. Ackley's nearly-flat outer region provides essentially no gradient signal toward the central basin for uniformly-random samples, while the structured ray seeding cuts directly through the flat region. Seeding is therefore not a tuning detail. It is part of the algorithm, and the harder the landscape, the more it matters. DE also reaches 100\% through $d = 8$ before collapsing at $d \geq 16$.

On Griewank, carry\_tiger is the \emph{only} method achieving 100\% at $d = 2$ and $d = 4$ (random seeding: 10\% and 0\% respectively; DE: 60\% and 10\%; CMA-ES: 50\% and 0\%). From $d \geq 8$ onward, both \chisao{} seeders fully recover, and DE partially recovers at $d \geq 32$ (80\%). The carry\_tiger advantage at low-$d$ Griewank shares the same mechanism as Ackley: product-coupled local minima produce a gradient field that uniform random samples fail to navigate.

Schwefel is the principal failure mode for \chisao{}. Both seeders fail at $d \geq 16$ (reporting no peaks, denoted $\dagger$); carry\_tiger achieves only 10\% at $d = 8$ and random fails entirely. The quality gate rejects all converged samples because Schwefel's deceptive landscape drives convergence to secondary peaks whose log-likelihood relative to the population maximum falls below the threshold $\log 0.1$. DE achieves 80\% at $d = 8$ before also collapsing at $d \geq 16$; no method succeeds beyond $d = 8$. This failure is outside \chisao{}'s design envelope, discussed in Section~\ref{sec:discussion}.

\paragraph{Scalable bowl and valley functions (Group B).}
\chisao{} achieves 100\% recovery on Sphere, Zakharov, Dixon-Price, Rotated Hyper-Ellipsoid, and Sum Squares through $d = 64$. CMA-ES fails on Sphere at $d \geq 16$ (despite it being strictly convex) due to tolerance miscalibration in the multi-restart configuration, and fails on Rotated Hyper-Ellipsoid from $d \geq 16$ for the same reason.

On Rosenbrock, both seeders fail at $d = 2$ (20\% and 50\% respectively) because the narrow parabolic valley floor creates a stationary manifold that fails the gradient-norm stick condition, but both recover fully at $d \geq 4$. DE and BH succeed at $d = 2$; CMA-ES collapses from $d = 8$.

Trid at high dimension is a quality-gate failure: the broad quadratic bowl with domain $[-d^2, d^2]$ provides insufficient gradient contrast at $d \geq 32$. Random seeding achieves 10\% at $d = 32$ and 0\% at $d = 64$; carry\_tiger achieves 30\% at $d = 32$ and 0\% at $d = 64$. DE, BH, and CMA-ES succeed at all dimensions since they apply no quality gate.

Sum of Different Powers shows a subtler pattern: \chisao{} random drops to 60\% at $d = 64$ while carry\_tiger remains at 100\%, confirming that structured ray coverage provides a meaningful advantage at the largest dimensions.

\paragraph{Fixed-2D multimodal functions (Group C).}
Table~\ref{tab:recovery_2d} reports results for all 13 Group C functions. Both \chisao{} seeders achieve 100\% on all but two. Easom (20\% random, 60\% CT) presents an isolated basin covering roughly $10^{-8}$ of the domain area; carry\_tiger's structured rays partially compensate but cannot guarantee coverage. Bukin N.6 (0\% both seeders) fails entirely due to its narrow parabolic ridge, the same quality-gate mechanism as Trid at high $d$; DE, BH, and CMA-ES all achieve 100\%. Drop Wave is the only function where \chisao{} outperforms a pure baseline: both seeders achieve 100\% while DE reaches only 60\% and CMA-ES 40\%.

\begin{table}[htbp]
\centering
\small
\caption{Mode recovery rate (\%) on Group A (scalable multimodal) functions, $d \in \{2, 8, 32, 64\}$, 10 trials. \textbf{Bold}: best at each dimension. $\dagger$: quality-gate failure (no peaks reported; see Section~\ref{sec:discussion}).}
\label{tab:recovery_multimodal}
\begin{tabular}{llrrrr}
\toprule
Function & Method & $d=2$ & $d=8$ & $d=32$ & $d=64$ \\
\midrule
\multirow{5}{*}{Rastrigin}
 & \chisao{} (random)       & \textbf{100} & \textbf{100} & \textbf{100} & \textbf{100} \\
 & \chisao{} (carry\_tiger) & \textbf{100} & \textbf{100} & \textbf{100} & \textbf{100} \\
 & DE                        & 90           & 0            & 0            & 0            \\
 & BH                        & 60           & 0            & 0            & 0            \\
 & CMA-ES                    & 70           & 0            & 0            & 0            \\
\midrule
\multirow{5}{*}{Ackley}
 & \chisao{} (random)       & \textbf{100} & 0            & 0            & 0            \\
 & \chisao{} (carry\_tiger) & \textbf{100} & \textbf{100} & \textbf{100} & \textbf{100} \\
 & DE                        & \textbf{100} & \textbf{100} & 0            & 0            \\
 & BH                        & \textbf{100} & 20           & 0            & 0            \\
 & CMA-ES                    & \textbf{100} & \textbf{100} & 0            & 0            \\
\midrule
\multirow{5}{*}{Schwefel}
 & \chisao{} (random)       & \textbf{100} & $0^\dagger$  & $0^\dagger$  & $0^\dagger$  \\
 & \chisao{} (carry\_tiger) & \textbf{100} & 10           & $0^\dagger$  & $0^\dagger$  \\
 & DE                        & 90           & \textbf{80}  & 0            & 0            \\
 & BH                        & 0            & 0            & 0            & 0            \\
 & CMA-ES                    & 80           & 0            & 0            & 0            \\
\midrule
\multirow{5}{*}{Griewank}
 & \chisao{} (random)       & 10           & \textbf{100} & \textbf{100} & \textbf{100} \\
 & \chisao{} (carry\_tiger) & \textbf{100} & \textbf{100} & \textbf{100} & \textbf{100} \\
 & DE                        & 60           & 0            & 80           & 80           \\
 & BH                        & 0            & 0            & 90           & \textbf{100} \\
 & CMA-ES                    & 50           & 30           & 0            & 0            \\
\midrule
\multirow{5}{*}{Levy}
 & \chisao{} (random)       & \textbf{100} & \textbf{100} & \textbf{100} & \textbf{100} \\
 & \chisao{} (carry\_tiger) & \textbf{100} & \textbf{100} & \textbf{100} & \textbf{100} \\
 & DE                        & \textbf{100} & \textbf{100} & 0            & 0            \\
 & BH                        & \textbf{100} & \textbf{100} & 40           & 0            \\
 & CMA-ES                    & \textbf{100} & \textbf{100} & 0            & 0            \\
\midrule
\multirow{5}{*}{Styblinski-Tang}
 & \chisao{} (random)       & \textbf{100} & \textbf{100} & \textbf{100} & \textbf{100} \\
 & \chisao{} (carry\_tiger) & \textbf{100} & \textbf{100} & \textbf{100} & \textbf{100} \\
 & DE                        & \textbf{100} & \textbf{100} & 0            & 0            \\
 & BH                        & 40           & 0            & 0            & 0            \\
 & CMA-ES                    & 90           & 70           & 0            & 0            \\
\midrule
\multirow{5}{*}{Michalewicz}
 & \chisao{} (random)       & \textbf{100} & \textbf{100} & \textbf{100} & \textbf{100} \\
 & \chisao{} (carry\_tiger) & \textbf{100} & \textbf{100} & \textbf{100} & \textbf{100} \\
 & DE                        & \textbf{100} & \textbf{100} & \textbf{100} & \textbf{100} \\
 & BH                        & \textbf{100} & \textbf{100} & \textbf{100} & \textbf{100} \\
 & CMA-ES                    & \textbf{100} & \textbf{100} & \textbf{100} & \textbf{100} \\
\bottomrule
\end{tabular}
\end{table}

\begin{table}[htbp]
\centering
\small
\caption{Mode recovery rate (\%) on Group B (scalable bowl and valley) functions, $d \in \{2, 8, 32, 64\}$, 10 trials. $\dagger$: quality-gate failure on broad low-contrast landscape (see Section~\ref{sec:discussion}).}
\label{tab:recovery_bowl}
\begin{tabular}{llrrrr}
\toprule
Function & Method & $d=2$ & $d=8$ & $d=32$ & $d=64$ \\
\midrule
\multirow{5}{*}{Sphere}
 & \chisao{} (random)       & \textbf{100} & \textbf{100} & \textbf{100} & \textbf{100} \\
 & \chisao{} (carry\_tiger) & \textbf{100} & \textbf{100} & \textbf{100} & \textbf{100} \\
 & DE                        & \textbf{100} & \textbf{100} & \textbf{100} & \textbf{100} \\
 & BH                        & \textbf{100} & \textbf{100} & \textbf{100} & \textbf{100} \\
 & CMA-ES                    & \textbf{100} & \textbf{100} & 0            & 0            \\
\midrule
\multirow{5}{*}{Rosenbrock}
 & \chisao{} (random)       & 20           & \textbf{100} & \textbf{100} & \textbf{100} \\
 & \chisao{} (carry\_tiger) & 50           & \textbf{100} & \textbf{100} & \textbf{100} \\
 & DE                        & \textbf{100} & 80           & 90           & 40           \\
 & BH                        & \textbf{100} & 90           & \textbf{100} & \textbf{100} \\
 & CMA-ES                    & \textbf{100} & 0            & 0            & 0            \\
\midrule
\multirow{5}{*}{Zakharov}
 & \chisao{} (random)       & \textbf{100} & \textbf{100} & \textbf{100} & \textbf{100} \\
 & \chisao{} (carry\_tiger) & \textbf{100} & \textbf{100} & \textbf{100} & \textbf{100} \\
 & DE                        & \textbf{100} & \textbf{100} & \textbf{100} & \textbf{100} \\
 & BH                        & \textbf{100} & \textbf{100} & \textbf{100} & \textbf{100} \\
 & CMA-ES                    & \textbf{100} & \textbf{100} & 0            & 0            \\
\midrule
\multirow{5}{*}{Dixon-Price}
 & \chisao{} (random)       & \textbf{100} & \textbf{100} & \textbf{100} & \textbf{100} \\
 & \chisao{} (carry\_tiger) & \textbf{100} & \textbf{100} & \textbf{100} & \textbf{100} \\
 & DE                        & \textbf{100} & \textbf{100} & \textbf{100} & \textbf{100} \\
 & BH                        & \textbf{100} & \textbf{100} & \textbf{100} & \textbf{100} \\
 & CMA-ES                    & \textbf{100} & \textbf{100} & \textbf{100} & \textbf{100} \\
\midrule
\multirow{5}{*}{Trid}
 & \chisao{} (random)       & \textbf{100} & \textbf{100} & 10           & $0^\dagger$  \\
 & \chisao{} (carry\_tiger) & \textbf{100} & \textbf{100} & 30           & $0^\dagger$  \\
 & DE                        & \textbf{100} & \textbf{100} & \textbf{100} & \textbf{100} \\
 & BH                        & \textbf{100} & \textbf{100} & \textbf{100} & \textbf{100} \\
 & CMA-ES                    & \textbf{100} & \textbf{100} & \textbf{100} & \textbf{100} \\
\midrule
\multirow{5}{*}{Rot.\ Hyper-Ellipsoid}
 & \chisao{} (random)       & \textbf{100} & \textbf{100} & \textbf{100} & \textbf{100} \\
 & \chisao{} (carry\_tiger) & \textbf{100} & \textbf{100} & \textbf{100} & \textbf{100} \\
 & DE                        & \textbf{100} & \textbf{100} & \textbf{100} & \textbf{100} \\
 & BH                        & \textbf{100} & \textbf{100} & \textbf{100} & \textbf{100} \\
 & CMA-ES                    & \textbf{100} & \textbf{100} & 0            & 0            \\
\bottomrule
\end{tabular}
\end{table}

\begin{table}[htbp]
\centering
\small
\caption{Mode recovery rate (\%) on Group C (fixed-2D multimodal) functions, 10 trials.}
\label{tab:recovery_2d}
\begin{tabular}{lrrrrr}
\toprule
Function & \chisao{} (rnd) & \chisao{} (CT) & DE & BH & CMA-ES \\
\midrule
Cross-in-Tray    & \textbf{100} & \textbf{100} & \textbf{100} & \textbf{100} & \textbf{100} \\
Eggholder        & \textbf{100} & \textbf{100} & \textbf{100} & \textbf{100} & \textbf{100} \\
Holder Table     & \textbf{100} & \textbf{100} & \textbf{100} & \textbf{100} & \textbf{100} \\
Schaffer N.2     & \textbf{100} & \textbf{100} & \textbf{100} & 90           & 30           \\
Schaffer N.4     & \textbf{100} & \textbf{100} & \textbf{100} & \textbf{100} & \textbf{100} \\
Levy N.13        & \textbf{100} & \textbf{100} & \textbf{100} & \textbf{100} & \textbf{100} \\
Langermann       & \textbf{100} & \textbf{100} & \textbf{100} & \textbf{100} & \textbf{100} \\
De Jong N.5      & \textbf{100} & \textbf{100} & \textbf{100} & \textbf{100} & \textbf{100} \\
Shubert          & \textbf{100} & \textbf{100} & \textbf{100} & \textbf{100} & \textbf{100} \\
Bohachevsky      & \textbf{100} & \textbf{100} & \textbf{100} & \textbf{100} & \textbf{100} \\
Drop Wave        & \textbf{100} & \textbf{100} & 60           & \textbf{100} & 40           \\
Easom            & 20           & 60           & 60           & 0            & 0            \\
Bukin N.6        & 0            & 0            & \textbf{100} & \textbf{100} & \textbf{100} \\
\bottomrule
\end{tabular}
\end{table}

\subsection{Wall-Clock Scaling}
\label{sec:wallclock}

\begin{figure}[htbp]
\centering
\includegraphics[width=0.75\textwidth]{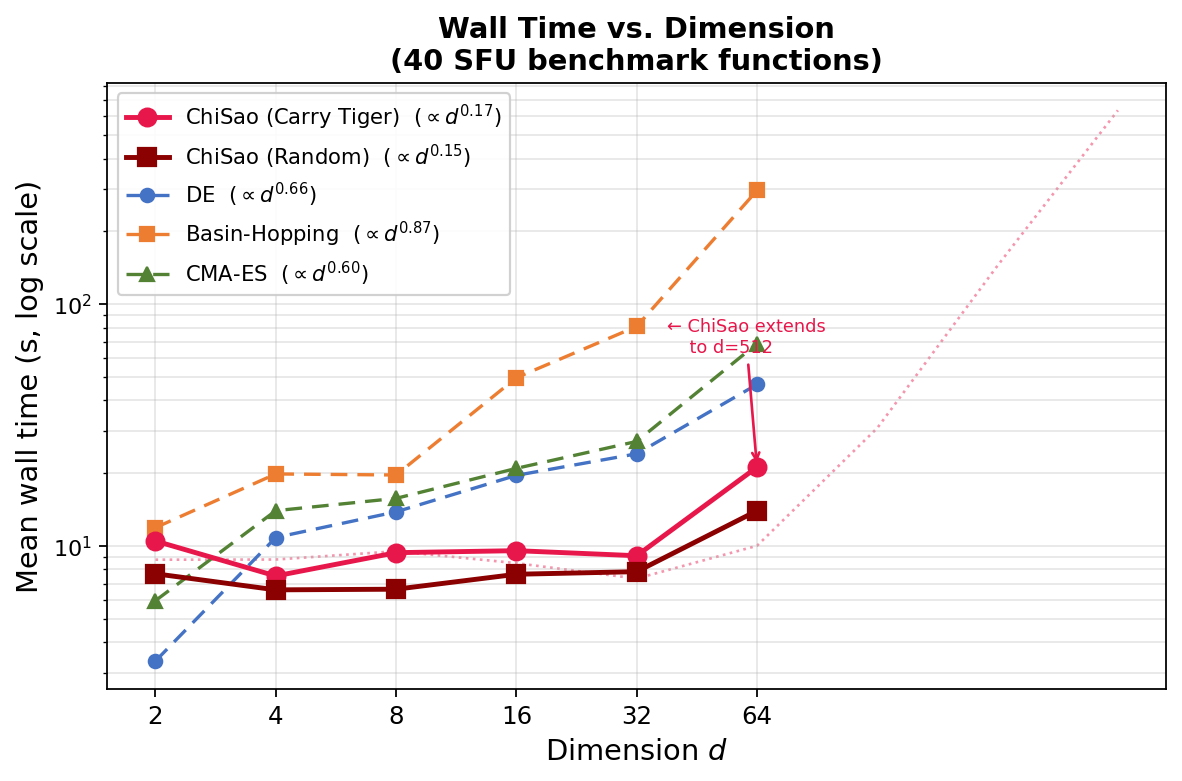}
\caption{Mean wall-clock time vs.\ dimension $d$ on a log scale, averaged over all functions where the method achieves $>0\%$ recovery. Fitted scaling exponents: \chisao{} $\propto d^{0.14}$--$d^{0.15}$ (GPU batch parallelism); DE $\propto d^{0.66}$; BH $\propto d^{0.73}$; CMA-ES $\propto d^{0.53}$.}
\label{fig:wall_time}
\end{figure}

Table~\ref{tab:wallclock} reports mean wall-clock times for six representative functions covering the qualitatively distinct outcomes in the benchmark: functions where only \chisao{} succeeds at high $d$ (Rastrigin, Ackley, Levy, Styblinski-Tang), functions where all methods succeed (Michalewicz), and functions where all methods succeed but wall-clock diverges dramatically (Rotated Hyper-Ellipsoid). Times are shown for methods with $>0\%$ recovery at that dimension; exact recovery rates are given in Tables~\ref{tab:recovery_multimodal}--\ref{tab:recovery_bowl}.

The central performance claim is not accuracy alone but the combination of maintained accuracy and near-constant wall-clock across dimension. On Rastrigin, \chisao{} (random) runs in 1.5--4.2\,s across $d = 2$ to $d = 64$; all baselines require increasing time for 0\% recovery from $d \geq 8$ onward. On Styblinski-Tang, \chisao{} (random) completes in 2.8--5.2\,s across the full range while DE and CMA-ES collapse at $d \geq 32$.

On Levy at $d = 64$, \chisao{} (random) achieves 100\% recovery in 25.3\,s; all baselines fail (0\%) despite requiring 44.5--315\,s. The combination of maintained recovery and faster absolute wall-clock is the GPU dividend: work that scales with dimension on CPU is parallelized across the batch.

On Michalewicz---the only scalable multimodal function where all methods achieve 100\%---wall-clock is the sole discriminator. At $d = 64$, \chisao{} (random) completes in 23.8\,s and carry\_tiger in 29.2\,s; BH requires 805.4\,s (${\sim}34\times$ slower), CMA-ES 46.5\,s ($2\times$), and DE 36.1\,s ($1.5\times$). The $34\times$ BH--\chisao{} ratio at $d = 64$ represents the GPU dividend on a problem where quality is not in question.

Rotated Hyper-Ellipsoid, a smooth unimodal function, provides the most extreme wall-clock contrast: at $d = 64$, all four methods achieving 100\% require 50.2\,s (\chisao{} random), 99.0\,s (CT), 312.5\,s (DE), and 1946.4\,s (BH), a $39\times$ gap between \chisao{} and BH. Since no multimodal complexity is involved, the entire speedup is attributable to GPU batch parallelism versus sequential CPU function evaluation.

On Ackley, the carry\_tiger wall-clock shows a non-monotone pattern: 8.6\,s ($d=2$), 7.0\,s ($d=8$), 4.6\,s ($d=32$), 23.1\,s ($d=64$). The decrease from $d=2$ to $d=32$ reflects early oscillation-cycle termination when carry\_tiger seeding places samples close to the global basin, reducing the number of exploration phases required. The $d=64$ increase reflects additional L-BFGS iterations needed to converge in the high-dimensional flat landscape.

\begin{table}[htbp]
\centering
\small
\caption{Mean wall-clock time (seconds) over 10 trials at $d \in \{2, 8, 32, 64\}$. Entries marked ``---'' indicate 0\% recovery at that dimension. Times for methods with partial recovery ($<100\%$) reflect only the trials that ran to completion; exact rates appear in Tables~\ref{tab:recovery_multimodal}--\ref{tab:recovery_bowl}. \chisao{} (CT) denotes carry\_tiger seeder.}
\label{tab:wallclock}
\begin{tabular}{llrrrr}
\toprule
Function & Method & $d=2$ & $d=8$ & $d=32$ & $d=64$ \\
\midrule
\multirow{5}{*}{Rastrigin}
 & \chisao{} (random)  & 1.7  & 2.2  & 1.5   & 4.2   \\
 & \chisao{} (CT)      & 4.9  & 5.5  & 2.2   & 7.0   \\
 & DE                  & 0.6  & ---  & ---   & ---   \\
 & BH                  & 2.8  & ---  & ---   & ---   \\
 & CMA-ES              & 1.2  & ---  & ---   & ---   \\
\midrule
\multirow{5}{*}{Ackley}
 & \chisao{} (random)  & 3.7  & ---  & ---   & ---   \\
 & \chisao{} (CT)      & 8.6  & 7.0  & 4.6   & 23.1  \\
 & DE                  & 4.0  & 17.2 & ---   & ---   \\
 & BH                  & 55.1 & 20.0 & ---   & ---   \\
 & CMA-ES              & 5.2  & 19.6 & ---   & ---   \\
\midrule
\multirow{5}{*}{Levy}
 & \chisao{} (random)  & 11.3 & 17.5 & 22.3  & 25.3  \\
 & \chisao{} (CT)      & 13.9 & 20.6 & 24.8  & 31.5  \\
 & DE                  & 6.0  & 25.7 & ---   & ---   \\
 & BH                  & 12.5 & 21.3 & 122.5 & ---   \\
 & CMA-ES              & 5.3  & 26.5 & ---   & ---   \\
\midrule
\multirow{5}{*}{Styblinski-Tang}
 & \chisao{} (random)  & 2.8  & 5.0  & 4.1   & 5.2   \\
 & \chisao{} (CT)      & 3.8  & 13.1 & 5.2   & 7.1   \\
 & DE                  & 0.8  & 9.5  & ---   & ---   \\
 & BH                  & 5.3  & ---  & ---   & ---   \\
 & CMA-ES              & 2.8  & 13.0 & ---   & ---   \\
\midrule
\multirow{5}{*}{Michalewicz}
 & \chisao{} (random)  & 2.6  & 7.4  & 13.3  & 23.8  \\
 & \chisao{} (CT)      & 5.2  & 14.2 & 12.0  & 29.2  \\
 & DE                  & 1.0  & 12.4 & 19.9  & 36.1  \\
 & BH                  & 9.3  & 26.5 & 286.9 & 805.4 \\
 & CMA-ES              & 3.0  & 14.5 & 21.2  & 46.5  \\
\midrule
\multirow{5}{*}{Rot.\ Hyper-Ellipsoid}
 & \chisao{} (random)  & 0.7  & 4.7  & 11.1  & 50.2  \\
 & \chisao{} (CT)      & 0.8  & 9.5  & 16.3  & 99.0  \\
 & DE                  & 2.9  & 28.3 & 116.9 & 312.5 \\
 & BH                  & 3.7  & 23.4 & 338.6 & 1946.4\\
 & CMA-ES              & 3.6  & 30.1 & ---   & ---   \\
\bottomrule
\end{tabular}
\end{table}

\subsection{Ablation Study}
\label{sec:ablation}

\begin{figure}[htbp]
\centering
\includegraphics[width=\textwidth]{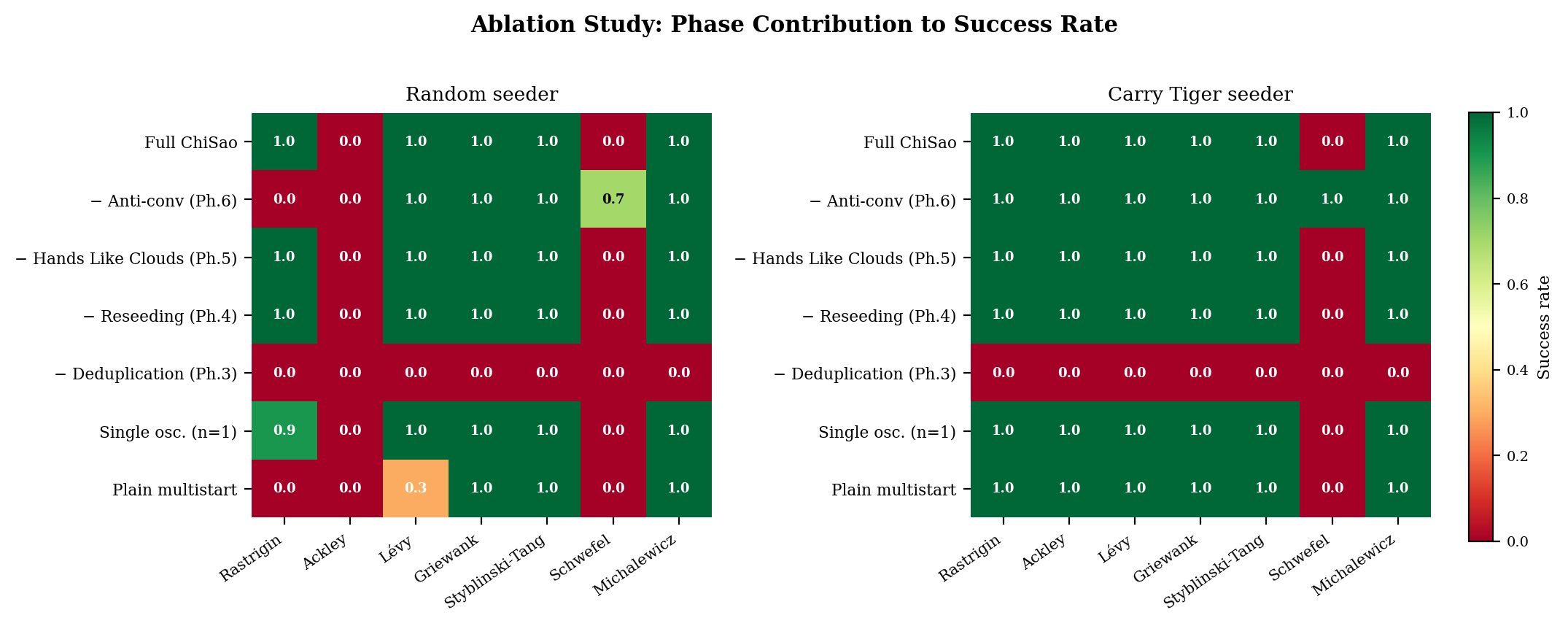}
\caption{Ablation heatmap: mode recovery rate at $d=8$ for each phase removal, under random (left) and carry\_tiger (right) seeding. Green = 100\%, red = 0\%. Removing deduplication (Ph.~3) is universally catastrophic. Removing anti-convergence (Ph.~6) improves Schwefel at $d=8$ under CT seeding, but this effect does not persist at higher dimensions (see text).}
\label{fig:ablation}
\end{figure}

\begin{table}[htbp]
\centering
\small
\caption{Ablation study: mode recovery rate (\%) on Group A functions at $d = 8$, 10 trials. Each row disables one structural component. Two seeding strategies shown: random (left) and carry\_tiger (right, italicised). \textbf{Bold}: degraded relative to full \chisao{} with the same seeder.}
\label{tab:ablation}
\begin{tabular}{lrrrrrrr|rrrrrrr}
\toprule
 & \multicolumn{7}{c|}{random seeder} & \multicolumn{7}{c}{carry\_tiger seeder} \\
Configuration & Ras & Ack & Lev & Gri & Sty & Sch & Mic
              & Ras & Ack & Lev & Gri & Sty & Sch & Mic \\
\midrule
Full \chisao{}                   & 100 & 0   & 100 & 100 & 100 & 0   & 100
                                  & 100 & 100 & 100 & 100 & 100 & 0   & 100 \\
No anti-conv.\ (Ph.~6 off)       & \textbf{0} & 0 & 100 & 100 & 100 & 70 & 100
                                  & 100 & 100 & 100 & 100 & 100 & 100 & 100 \\
No HLC (Ph.~5 off)               & 100 & 0   & 100 & 100 & 100 & 0   & 100
                                  & 100 & 100 & 100 & 100 & 100 & 0   & 100 \\
No reseeding (Ph.~4 off)         & 100 & 0   & 100 & 100 & 100 & 0   & 100
                                  & 100 & 100 & 100 & 100 & 100 & 0   & 100 \\
No dedup.\ (Ph.~3 off)           & \textbf{0} & \textbf{0} & \textbf{0} & \textbf{0} & \textbf{0} & 0 & \textbf{0}
                                  & \textbf{0} & \textbf{0} & \textbf{0} & \textbf{0} & \textbf{0} & 0 & \textbf{0} \\
$n_\mathrm{osc}=1$               & 90  & 0   & 100 & 100 & 100 & 0   & 100
                                  & 100 & 100 & 100 & 100 & 100 & 0   & 100 \\
All off (plain multistart)       & \textbf{0} & 0 & \textbf{30} & 100 & 100 & 0 & 100
                                  & 100 & 100 & 100 & 100 & 100 & 0   & 100 \\
\bottomrule
\end{tabular}
\end{table}

Three results dominate the ablation (Table~\ref{tab:ablation}), with the $d=8$ findings extended by additional ablations at $d \in \{16, 32, 64\}$ under carry\_tiger seeding.

\paragraph{Deduplication is not optional.} Disabling Phase~3 (no deduplication) causes 0\% recovery on all seven functions under both seeders at every tested dimension. Without deduplication, the reseeding phase (Phase~4) replenishes consumed slots from known peaks without removing them from the stuck set; the population rapidly saturates at the first discovered mode and exploration terminates. Deduplication is the only component whose removal is universally and persistently catastrophic.

\paragraph{Full \chisao{} is the most robust configuration across dimensions.} Across all tested dimensions ($d \in \{8, 16, 32, 64\}$), full \chisao{} with carry\_tiger seeding achieves a stable 6/7 success rate, recovering all Group~A functions except Schwefel consistently. No other configuration maintains this stability. Plain multistart (all off) degrades to 5/7 at $d \geq 32$ as Levy recovery collapses. Schwefel remains the persistent exception at all dimensions and all configurations: its deceptive landscape places the global basin far from secondary peaks, and the quality-gate interacts unfavourably with the gradient structure at higher dimensions regardless of phase configuration.

\paragraph{Anti-convergence effects are dimension-dependent.} At $d=8$ under carry\_tiger, disabling Phase~6 raises Schwefel recovery from 0\% to 100\%, appearing to make it the optimal minimal configuration. However, this advantage is dimension-specific: at $d=16$, Levy recovery under the Ph.~6-off configuration drops to 20\%; at $d=32$, Levy fails entirely while Schwefel recovers only 60\%; at $d=64$, both Levy and Schwefel fail (0\%) under Ph.~6-off. Anti-convergence is what sustains Levy recovery at scale. The mechanism is as follows: on Schwefel at $d=8$, the Hands Like Clouds phase navigates samples toward the global region but the anti-convergence momentum then displaces them into secondary peaks that trigger quality-gate failure; disabling anti-convergence fixes this at low dimension. At higher dimensions, however, basin volumes contract and anti-convergence becomes necessary for Levy to direct samples away from shallow traps during later oscillation passes. The $d=8$ ablation result is thus not a stable operating point---it resolves one failure mode while introducing another that emerges at scale. Under random seeding, anti-convergence is additionally critical for Rastrigin at all dimensions (0\% without it), reinforcing that it remains a necessary component of the full algorithm.

\subsection{Noise Robustness}
\label{sec:noise}

We test \chisao{}'s mode detection reliability when $f$ is corrupted by additive Gaussian noise: $\tilde{f}(\theta) = f(\theta) + \varepsilon$, $\varepsilon \sim \mathcal{N}(0, \sigma_{\mathrm{noise}}^2)$. We use a 2-mode Gaussian mixture in $d = 6$ with peaks at $\pm 2\hat{e}_1$, mode separation $4\sigma$.

These results are available from the SunBURST benchmark suite \citep{wolfson2025sunburst}; we reproduce them here for completeness.

\begin{table}[htbp]
\centering
\small
\caption{Mode detection reliability under likelihood noise ($d=6$, 2 peaks at $\pm 2\hat{e}_1$, 10 trials per level). Both peaks are correctly identified in all 70 trials across all noise levels.}
\label{tab:noise}
\begin{tabular}{ccccc}
\toprule
$\sigma_{\mathrm{noise}}$ & Both Found & Mean Peak Error & Spurious & Success \\
\midrule
$10^{-3}$ & 10/10 & $2.1 \times 10^{-3}$ & 0 & 100\% \\
$10^{-2}$ & 10/10 & $1.7 \times 10^{-2}$ & 0 & 100\% \\
$0.05$    & 10/10 & $8.5 \times 10^{-2}$ & 0 & 100\% \\
$0.1$     & 10/10 & $1.7 \times 10^{-1}$ & 0 & 100\% \\
$0.2$     & 10/10 & $3.5 \times 10^{-1}$ & 0 & 100\% \\
$0.5$     & 10/10 & $7.3 \times 10^{-1}$ & 0 & 100\% \\
$1.0$     & 10/10 & $1.1$                & 0 & 100\% \\
\bottomrule
\end{tabular}
\end{table}

The key result is 100\% mode detection even at $\sigma_{\mathrm{noise}} = 1.0$, where noise amplitude equals signal scale. Peak-location error degrades proportionally to noise as expected from Hessian estimation uncertainty, but mode detection is never compromised. Zero spurious detections through $\sigma = 0.5$. This robustness follows from \chisao{}'s gradient-based stick detection: the convergence-anticonvergence cycle is driven by sign-consistent gradient estimates averaged over the trajectory rather than evaluated at a single noisy point.

%% ============================================================
\section{Discussion}
\label{sec:discussion}
%% ============================================================

The noise result is the one that matters (Table~\ref{tab:noise}). Mode detection holds at 100\% even when the noise amplitude equals the signal scale. This is not luck: trajectory-averaged gradients absorb the per-evaluation noise that point-evaluation methods cannot tolerate. The oscillation cycle, the freeze-and-explore asymmetry, and the reseeding strategies do the rest.

\paragraph{Relation to SunBURST.} \chisao{} was developed as the mode-discovery engine for SunBURST \citep{wolfson2025sunburst}, a GPU-accelerated Bayesian evidence calculator. In that context, ChiSao receives initial populations from a ray-casting seeding strategy (CarryTiger) and its output modes are passed to a Laplace-approximation evidence integrator (BendTheBow). The present paper establishes ChiSao as a standalone contribution independent of that pipeline, applicable to any black-box optimization problem with gradient access.

\paragraph{Limitations.} \chisao{} assumes a smooth objective: it estimates gradients by finite differences (the default, used for every result here) or accepts them analytically, so genuinely non-differentiable or discrete problems are out of scope. The Hands Like Clouds smoothing scale $\sigma$ is auto-estimated but may require manual tuning on strongly heterogeneous landscapes. The anti-convergence momentum step size $\alpha$ is sensitive on functions with very small or very large gradient magnitudes; an adaptive line-search would improve robustness. The theoretical analysis (Section~\ref{sec:theory}) covers only the log-concave and first-oscillation-pass cases; a full convergence theory for the oscillation dynamics remains open.

\paragraph{Failure modes.} The quality-gate threshold ($\delta = 0.1$, i.e., modes within one log-decade of the global maximum) is the primary source of \chisao{}'s failures on this benchmark. On Schwefel, the deceptive landscape drives convergence to secondary peaks whose likelihood falls below threshold; on Trid at high $d$ and Bukin N.6, the broad low-contrast bowl and narrow ridge, respectively, produce converged samples that pass neither the gradient-norm nor the likelihood test. In all three cases, the failure is a property of the quality gate's interaction with the landscape, not of the search dynamics. Baselines without quality gates (DE, BH, CMA-ES) succeed where \chisao{} fails on these functions, at the cost of reporting spurious peaks on harder multimodal problems. \chisao{} underperforms additionally on functions with exponentially many equal-height modes: the deduplication and reseeding logic does not scale to $O(2^d)$ distinct modes.

\paragraph{Future work.} Since all reported results already use finite-difference gradients, analytic or automatic-differentiation gradients are a direct speedup lever: they remove the $2dN$ finite-difference evaluations per L-BFGS step, though the GPU already absorbs them in $O(1)$ wall-clock while $2dN < P$. On the theoretical side the missing piece is a Markov-chain analysis of basin-crossing probability under the anti-convergence momentum dynamics, which would close the gap between Proposition~\ref{prop:logconcave} and the empirical multi-modal results.

%% ============================================================
\section*{Data Availability}
%% ============================================================

All benchmark scripts, result files, and analysis code are available at \url{https://github.com/beastraban/chisao}.

\section*{Code Availability}

\chisao{} is released as a standalone open-source Python package (MIT License) on PyPI, installable via \texttt{pip install chisao}. Source code, the test suite, and the full SFU benchmark suite are available at \url{https://github.com/beastraban/chisao}.

\section*{Competing Interests}

The author declares no competing interests.

%% ============================================================
\appendix
%% ============================================================

\section{The $L_\infty$ Metric for Deduplication}
\label{app:linf}

The $L_\infty$ (Chebyshev) distance $\|x - y\|_\infty = \max_i |x_i - y_i|$ has two advantages over $L_2$ for high-dimensional deduplication. First, it is dimension-independent in expectation: for uniform random points in $[0,1]^d$, $\mathbb{E}[\|x-y\|_\infty] \approx d/(d+1) \to 1$, while $L_2$ distance diverges as $\sqrt{d/6}$. A fixed threshold $\epsilon_{\mathrm{dup}}$ in $L_\infty$ therefore has consistent semantics across dimensions. Second, the $L_\infty$ ball $\{y : \|x-y\|_\infty \leq r\}$ is a hypercube of side $2r$, enabling efficient spatial hashing for $O(N)$ deduplication rather than $O(N^2)$ pairwise comparison.

\section{Hands Like Clouds: Smoothed Gradient Analysis}
\label{app:hlc}

The smoothed function $f_\sigma(\theta) = \mathbb{E}_{z \sim \mathcal{N}(0,I_d)}[f(\theta + \sigma z)]$ satisfies:
\begin{equation}
    \nabla f_\sigma(\theta) = \mathbb{E}_{z \sim \mathcal{N}(0,I_d)}[\nabla f(\theta + \sigma z)]
\end{equation}
by Leibniz's rule (assuming $f$ is differentiable a.e.). The Monte Carlo estimator~\eqref{eq:smoothed-grad} is unbiased. For functions with multi-scale structure $f = f_{\mathrm{global}} + f_{\mathrm{local}}$ where $f_{\mathrm{local}}$ has characteristic scale $\lambda < \sigma$, the smoothed gradient $\nabla f_\sigma$ eliminates $f_{\mathrm{local}}$ contributions (their integral over $\mathcal{N}(0,\sigma^2 I)$ vanishes by symmetry for zero-mean local oscillations), revealing the global basin structure.

\citet{nesterov2017random} showed that for convex $f$, gradient descent on $f_\sigma$ converges to within $O(\sigma^2 L)$ of the optimum of $f$, where $L$ is the Lipschitz constant of $\nabla f$. For non-convex $f$, the smoothing provides no convergence guarantee but empirically helps samples navigate past local barriers whose scale is smaller than $\sigma$.

\section{Hyperparameter Sensitivity}
\label{app:hyperparams}

Each row of Table~\ref{tab:sensitivity} varies one hyperparameter from its default while holding the rest fixed at the values of Table~\ref{tab:hyperparams}. ``Robust set'' is the mean mode-recovery rate across Rastrigin, Ackley, Levy, Griewank, Styblinski-Tang, and Michalewicz, $d{=}32$, averaged over both seeders (random, carry\_tiger), 30 trials per cell. Schwefel is reported separately at $d{=}8$ because it sits at a noise floor for every configuration, as discussed below.

\begin{table}[htbp]
\centering
\small
\caption{Hyperparameter sensitivity sweep. Mode recovery rate (\%), 30 trials per cell.}
\label{tab:sensitivity}
\begin{tabular}{lccl}
\toprule
Parameter variation & Robust set (\%) & Schwefel ($d{=}8$) (\%) & Notes \\
\midrule
$n_{\mathrm{osc}} = 1$              & 100   & 10.0 & \\
$n_{\mathrm{osc}} = 3$ (default)    & 100   &  8.3 & \\
$n_{\mathrm{osc}} = 5$              & 100   & 10.0 & \\
\midrule
$n_{\mathrm{anti}} = 1$             & 100   & 13.3 & \\
$n_{\mathrm{anti}} = 5$ (default)   & 100   &  5.0 & \\
$n_{\mathrm{anti}} = 10$            & \textbf{83.3} &  6.7 & Levy collapses to 0\% under both seeders \\
\midrule
$\mu = 0.5$                         & 100   &  8.3 & \\
$\mu = 0.9$ (default)               & 100   & 13.3 & \\
$\mu = 0.99$                        & 100   & 10.0 & \\
\midrule
$K = 5$                             & 100   & 11.7 & \\
$K = 10$ (default)                  & 100   &  5.0 & \\
$K = 20$                            & 100   & 10.0 & \\
\bottomrule
\end{tabular}
\end{table}

Three observations follow. (i) On the six landscapes that \chisao{} solves at 100\% with the defaults, recovery is invariant to every perturbation in the sweep with one exception. (ii) That exception is $n_{\mathrm{anti}} = 10$: Levy collapses from 100\% to 0\% under both seeders. Doubling the anti-convergence step count over-disperses samples past Levy's narrow global basin during the escape phase, and the next L-BFGS pass returns from outside the basin of attraction. The default $n_{\mathrm{anti}} = 5$ is on the safe side of this ceiling; this is the single non-trivial constraint that the sweep imposes on the defaults. (iii) Schwefel sits at a 5--13\% recovery floor across every cell, including the default, with no swept hyperparameter raising it out of the noise band. This confirms the failure-mode diagnosis in Section~\ref{sec:discussion}: Schwefel's failure is a property of the quality-gate threshold $\delta = 0.1$ interacting with the deceptive landscape, not of the search-dynamics parameters that the sweep covers. A sensitivity sweep over $\delta$ would close this loop, but $\delta$ is not currently exposed as a hyperparameter in the \texttt{sticky\_hands} interface.

A full per-function breakdown of the sweep (84 cells $\times$ 2 seeders) is provided in the supplementary results file.

\section{Benchmark Function Definitions}
\label{app:benchmarks}

For reproducibility, we give exact definitions used in experiments. All functions are negated for maximization.

\paragraph{Rastrigin.} $f(\theta) = -\left[Ad + \sum_{i=1}^d (\theta_i^2 - A\cos(2\pi\theta_i))\right]$, $A=10$, $\theta \in [-5.12, 5.12]^d$. Global maximum at $\theta = 0$, $f(0) = 0$.

\paragraph{Ackley.} $f(\theta) = -\left[-a \exp(-b\sqrt{d^{-1}\sum \theta_i^2}) - \exp(d^{-1}\sum \cos(c\theta_i)) + a + e\right]$, $a=20$, $b=0.2$, $c=2\pi$, $\theta \in [-32.768, 32.768]^d$.

\paragraph{Schwefel.} $f(\theta) = -\left[418.9829d - \sum_{i=1}^d \theta_i \sin(\sqrt{|\theta_i|})\right]$, $\theta \in [-500, 500]^d$. Global maximum near $\theta_i = 420.97$.

\paragraph{Styblinski-Tang.} $f(\theta) = -\frac{1}{2}\sum_{i=1}^d (\theta_i^4 - 16\theta_i^2 + 5\theta_i)$, $\theta \in [-5, 5]^d$.

\paragraph{Michalewicz.} $f(\theta) = \sum_{i=1}^d \sin(\theta_i)\sin^{2m}(i\theta_i^2/\pi)$, $m=10$, $\theta \in [0, \pi]^d$.

\paragraph{Trid.} $f(\theta) = -\left[\sum_{i=1}^d (\theta_i - 1)^2 - \sum_{i=2}^d \theta_i \theta_{i-1}\right]$, $\theta \in [-d^2, d^2]^d$. Global maximum at $\theta_i^* = i(d+1-i)$ with $f^* = d(d+4)(d-1)/6$. Unimodal but increasingly ill-conditioned with $d$: Hessian eigenvalues span many orders of magnitude and the optimum coordinates $\theta_i^*$ range from $d$ to $d(d+1)^2/4$.

\paragraph{Bukin N.6.} $f(x, y) = -\left[100\sqrt{|y - 0.01\, x^2|} + 0.01\, |x + 10|\right]$, $x \in [-15, -5]$, $y \in [-3, 3]$. Global maximum at $(-10, 1)$ with $f^* = 0$. The minimum lies on the parabolic ridge $y = 0.01\,x^2$; the function is non-differentiable along this ridge, and the $\sqrt{|\cdot|}$ gradient diverges as the ridge is approached.

\paragraph{Gaussian mixture.} $f(\theta) = \log \sum_{k=1}^K w_k \mathcal{N}(\theta; \mu_k, \sigma^2 I)$ with $w_k = 1/K$, $\mu_k$ random in $\Theta$, $\sigma = 1$, $\Theta = [-10, 10]^d$.

%% ============================================================
\bibliographystyle{unsrtnat}
\bibliography{chisao}
%% ============================================================

\end{document}